\documentclass[letterpaper]{article} 
\usepackage{aaai25}  
\usepackage{times}  
\usepackage{helvet}  
\usepackage{courier}  
\usepackage[hyphens]{url}  
\usepackage{graphicx} 
\urlstyle{rm} 
\usepackage{natbib}  
\usepackage{caption} 
\frenchspacing  
\setlength{\pdfpagewidth}{8.5in} 
\setlength{\pdfpageheight}{11in} 
%
\usepackage{algorithm}
\usepackage{algorithmic}
\usepackage{makecell}
\usepackage{amsmath}
\usepackage{amsfonts}

%
\usepackage{newfloat}
\usepackage{listings}
\DeclareCaptionStyle{ruled}{labelfont=normalfont,labelsep=colon,strut=off} 
\lstset{%
	basicstyle={\footnotesize\ttfamily},
	numbers=left,numberstyle=\footnotesize,xleftmargin=2em,
	aboveskip=0pt,belowskip=0pt,%
	showstringspaces=false,tabsize=2,breaklines=true}
\floatstyle{ruled}
\newfloat{listing}{tb}{lst}{}
\floatname{listing}{Listing}

\def\pAp@k{\texttt{pAp@k}}

\def\bw{\mathbf{w}}

\usepackage{multirow}
\usepackage{xcolor}
\definecolor{llgray}{HTML}{D7D7D7} 
\newcommand{\ie}{\emph{i.e.}}
\newcommand{\eg}{\emph{e.g.}}

\newtheorem{theorem}{Theorem}
%
\pdfinfo{
/TemplateVersion (2025.1)
}

\nocopyright

\setcounter{secnumdepth}{2} 

%


\title{Meta-Learning with Heterogeneous Tasks}
\author {
    Zhaofeng Si\textsuperscript{\rm 1},
    Shu Hu\textsuperscript{\rm 2},
    Kaiyi Ji\textsuperscript{\rm 1},
    Siwei Lyu\textsuperscript{\rm 1}
}
\affiliations {
    \textsuperscript{\rm 1}University at Buffalo, State University of New York\\
    \textsuperscript{\rm 2}Purdue University\\
    zhaofeng@buffalo.edu, hu968@purdue.edu, kaiyiji@buffalo.edu, siweilyu@buffalo.edu 
}

\begin{document}

\maketitle

\begin{abstract}
Meta-learning is a general approach to equip machine learning models with the ability to handle few-shot scenarios when dealing with many tasks. Most existing meta-learning methods work based on the assumption that all tasks are of equal importance. However, real-world applications often present heterogeneous tasks characterized by varying difficulty levels, noise in training samples, or being distinctively different from most other tasks.
In this paper, we introduce a novel meta-learning method designed to effectively manage such heterogeneous tasks by employing rank-based task-level learning objectives, {\em \underline{He}terogeneous \underline{T}asks \underline{Ro}bust \underline{M}eta-learning} (HeTRoM). HeTRoM is proficient in handling heterogeneous tasks, and it prevents
easy tasks from overwhelming the meta-learner. The approach allows for an efficient iterative optimization algorithm based on bi-level optimization, which is then improved by integrating statistical guidance. Our experimental results demonstrate that our method provides flexibility, enabling users to adapt to diverse task settings and enhancing the meta-learner's overall performance.
\end{abstract}

%

\section{Introduction}
\label{sec:intro}

Deep neural network models trained with supervised learning have made significant progress in computer vision, in many cases surpassing human abilities. Conventional supervised learning needs a large number of training samples, but in real-world situations such as image classification \cite{russakovsky2015imagenet}, models must tackle multiple tasks that arise during the learning process, each with only a few samples available. This scenario is often referred to as {\em few-shot} learning. Meta-learning, or learning to learn, has been proposed as a solution to few-shot learning. In particular, optimization-based meta-learning methods introduce a meta-model \cite{finn2017model,huisman2021survey}, whose parameters can be fine-tuned for individual tasks using a few gradient descent update steps. In this way, the meta-model can use the knowledge gained from previous training tasks to enable the learning model to adapt rapidly to new tasks during meta-testing.

Current meta-learning methods often implicitly assume that all tasks are more or less equal in importance. However, in real-world applications, tasks are often heterogeneous in the level of learning difficulty, data quality, and domain consistency. We use image retrieval as an example to demonstrate the situation, Figure \ref{fig:noisy_setting}. Specifically, we can identify three types of tasks: (a) Harder tasks, which are tasks with ambiguous contexts (row 1: wolf vs. husky) are more challenging for the model; (b) Noisy tasks: tasks with corrupted labels (row 3, first sample mislabeled as ``boxer’’ and the query mislabeled as ``lion'') are more prone to errors; and (c) Outlier tasks that are considerably different from most training tasks (row 3 birds vs. row 1-2 airplanes). We can often differentiate these tasks using the task-level losses on actual multi-task meta-learning tasks, as shown in the bottom row of Figure \ref{fig:noisy_setting}. It shows a histogram that outlines the distribution of task-level losses by conducting fast adaptation using a MAML~\cite{finn2017model} trained meta-model on clean tasks from Mini-ImageNet, noisy (tasks derived from Mini-ImageNet but with labels randomly flipped), and outlier tasks with classes from Tiered-ImageNet that do not overlap with those in Mini-ImageNet tasks.

\begin{figure*}[t]
  \centering  
  \includegraphics[width=0.85\linewidth]{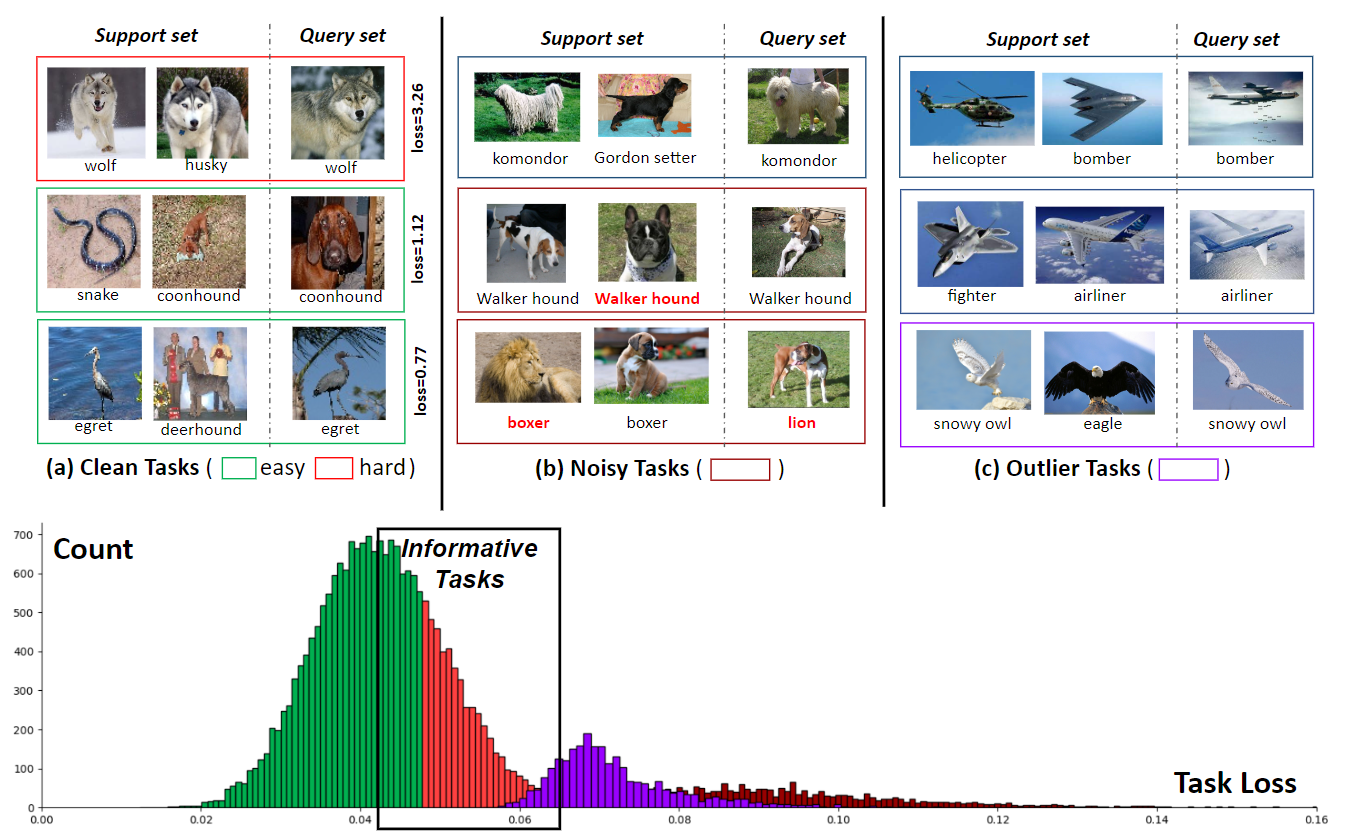}
  \caption{\it Heterogeneous tasks in meta-learning for image retrieval on real-world datasets (Mini-ImageNet and Tiered-ImageNet). {\bf (Top)} Three different types of tasks based on their difficulty. {\bf (Bottom)} Histograms of losses can be used to differentiate three types of tasks. 
  The displayed histogram is derived from real-world datasets, the details are provided in texts.
  }
  \label{fig:noisy_setting}
\end{figure*}

The heterogeneity introduces challenges for the meta-training step. 
Several works have concentrated on improving the effectiveness of meta-learning algorithms at the task level to address challenges posed by varying task difficulties \cite{collins2020task, nguyen2023task, liu2023not} and tasks with noise \cite{killamsetty2022nested, liu2020adaptive}.
However, limited works studied the effects of outlier tasks (also known as Out-of-Distribution tasks in \cite{killamsetty2022nested}) in meta-learning and the method of handling multiple heterogeneous tasks concurrently. A substantial body of literature has addressed the development of robust meta-learning algorithms to mitigate the impact of corrupted labels \cite{wang2020training, ren2018learning, killamsetty2022nested, lee2018cleannet, liu2020adaptive}. 
Nevertheless, these works predominantly concentrate on sample-level noise, with limited attention directed to examining robustness at task level.

In this paper, we introduce the {\em \underline{He}terogeneous \underline{T}asks \underline{Ro}bust \underline{M}eta-learning} (HeTRoM) method to address the problem posed by the heterogeneous tasks in meta-learning. The core component of HeTRoM is a new rank-based task-level loss that enables the dynamical selection of tasks in the meta-training step. Specifically, the HeTRoM loss controls the cut-off range in the ranked individual task losses, which reduces the influence of tasks whose losses are either too small (easy tasks) or too large (noisy tasks and outlier tasks). The hyper-parameters determining the range can be obtained based on a log-normal model of the distribution of the individual task losses. We show that the HeTRoM objective can be formulated as a bi-level optimization problem and efficiently solved with a stochastic gradient-based algorithm to update the meta-model. Experimental evaluations on two benchmark datasets underscore the effectiveness and robustness of HeTRoM.


\section{Related Works}

As the most representative optimization-based approach, Model Agnostic Meta-learning (MAML)~\cite{finn2017model} learns an initialization such that a gradient descent procedure starting from the initial model can achieve fast adaptation. In the following years, a large number of works on various MAML variants have been proposed ~\cite{grant2018recasting,finn2019online,
raghu2020rapid}. 

Another group of meta-regularization approaches has been proposed for improving the bias for a regularized empirical risk  
minimization problem~\cite{alquier2017regret, 
balcan2019provable,zhou2019efficient}. 
In addition, there is a common embedding-based framework in few-shot  learning~\cite{bertinetto2018meta,lee2019meta,
zhou2018deep}. The goal of this framework is to learn a shared embedding model that can be used for all tasks. Task-specific parameters are then learned for each task based on the embedded features. 

There exists a body of literature dedicated to investigating the robustness of optimization-based meta-learning. Some of these studies concentrate on addressing noise at the sample level within few-shot tasks.\cite{
killamsetty2022nested, 
mallem2023efficient, sun2022learning}.
For example, Eigen-Reptile \cite{chen2022robust} is a method that derives the updating direction of the meta-model by calculating the main direction of fast adaptation, which mitigates the impact of corrupted data within tasks. 

There are also several works focusing on task-level robustness \cite{yao2021meta, collins2020task, killamsetty2022nested, liang2022few}. \citealt{yao2021meta} propose to use an adaptive task scheduler (ATS) to decide the next tasks to be used while training to avoid the effect of tasks with corrupted labels. 
NestedMAML \cite{killamsetty2022nested} uses a weighted sum of instances or tasks in meta-training for handling Out-of-Distribution (OOD) tasks or noisy data, where the weights are updated through a nested bi-level optimization approach. While this re-weighting approach can reduce the effect of corrupted samples and tasks, it cannot eliminate them entirely. 
\citealt{liang2022few} focus on improving the robustness of metric-based meta-learning, which is not within the scope of this paper, as we concentrate on optimization-based meta-learning.

\section{Background} \label{sec:sensitive-motivation}

Meta-learning is commonly formulated as an optimization problem \cite{huisman2021survey}. Let $\{\mathcal{T}_i\}_{i=1}^n$ be a collection of $n$ tasks. Each task $\mathcal{T}_i$ has a specific loss function $\mathcal{L}(\phi, w_i; \xi)$ in sample $\xi$, where $\phi$ are the parameters of the shared embedding model for all tasks and $w_i$ are the task-specific parameters. We aim to learn the optimal parameters $\phi$ that work well for all tasks, and each task uses the optimized $\phi$ to adapt its specific parameters $w_i$ minimizing $\mathcal{L}(\phi, w_i; \xi)$. For instance, if the learning task is solved with a series of deep neural network models, $w_i$ may correspond to the parameters of the final layer within a neural network, while the parameters of the preceding layers are represented by $\phi$. For simplicity, we denote $\bw = (w_1,...,w_n)$ as the collection of task-specific parameters. The objective function of meta-learning is given by
\begin{equation}
    \begin{aligned}
    &\min_{\phi} L_{\mathcal{D}}(\phi, \bw^*)=\textstyle \frac{1}{n}\sum_{i=1}^n \ell_i(\phi) \\ 
    &\text{s.t.} \ \ \bw^* = \arg\min_{\bw}\mathcal{L}_{\mathcal{S}}(\phi, \bw):=\textstyle\frac{1}{n}\sum_{i=1}^n\mathcal{L}_{\mathcal{S}_i}(\phi, w_i),
    \end{aligned}
\label{eq:meta_learning}
\end{equation}
where $\ell_i(\phi)=\frac{1}{|\mathcal{D}_i|}\sum_{\xi\in\mathcal{D}_i}\mathcal{L}(\phi, w_i^*;\xi)$ and $\mathcal{L}_{\mathcal{S}_i}(\phi, w_i)=\frac{1}{|\mathcal{S}_i|}\sum_{\xi\in\mathcal{S}_i}\mathcal{L}(\phi, w_i;\xi)$. This is a bi-level optimization problem \cite{bracken1973mathematical, ji2021bilevel}, which can be solved in two stages. In the inner learning stage, the base learner of each task $\mathcal{T}_i$ searches the minimizer $w_i^*$ of its loss over a support set $\mathcal{S}_i$.  In the outer learning stage, the meta-learner evaluates the minimizers $w_i^*$ on a held-out query set $\mathcal{D}_i$ and optimizes $\phi$ across all tasks. 

As we mentioned in the Introduction, most existing meta-learning methods assume all tasks have equal influence on the meta-model. However, in real-world applications, tasks are often heterogeneous in terms of learning difficulty, data quality, and consistency. In the following experiments, we show the influence of heterogeneous tasks on meta-learning.

\noindent
\textbf{Influences of the Majority Tasks}.
\textit{\underline{Setting}:} We train two meta-models, one using all tasks in each mini-batch of size $16$ for updating meta-model parameters (traditional learning), and the other using only the top-8 tasks with the highest individual task losses (hard task mining). We evaluate both models by fast adaptation to all training tasks and testing them on the same test dataset. The distributions of task query losses and the test accuracy are presented in Figure \ref{fig:sensitivity} (a).
\textit{\underline{Analysis}:} Using hard task mining during meta-training leads to lower task losses compared to using all tasks, indicating that easy tasks dominate training. Moreover, training with ``hard'' tasks improves the generalization of the meta-model, as demonstrated by the increased testing accuracy.  

\begin{figure*}[t]
  \centering
  \includegraphics[width=0.95\linewidth]{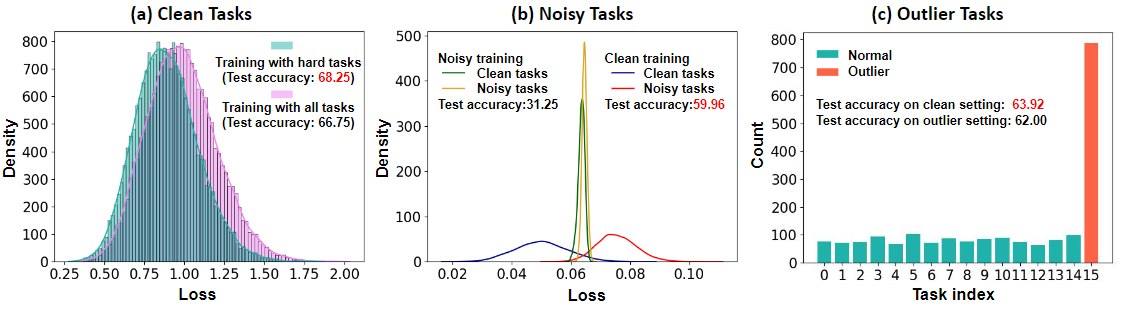}
  \caption{\it Behavior analysis of meta-learning. (a) The loss distributions of training tasks with models trained with all tasks and hard tasks, respectively. (b) The loss distributions of clean and noisy tasks with models trained with a clean dataset (``Clean training``) and a noisy dataset (``Noisy training``), respectively.  (c) The frequency of each task index with the highest loss throughout the training process, where the outlier task is assigned index 15 in each iteration. "Test accuracy" (\%) is obtained by testing on a separate clean test set.}
  \label{fig:sensitivity}
\end{figure*}

\noindent
\textbf{Sensitive to Noisy Tasks}. 
\textit{\underline{Setting}:} We generate two training sets: clean and noisy, where 60\% of the tasks contain samples with flipped labels.
Then, we trained two meta-models on these two datasets separately and conducted fast adaptation on all clean tasks and noisy tasks respectively in the noisy training set. The distributions of training task losses together with test accuracy are shown in Figure \ref{fig:sensitivity} (b).
\textit{\underline{Analysis}:} 
When trained with a noisy dataset, there is a significant overlap between the task loss distributions of noisy and clean tasks (yellow and green lines, respectively). However, when trained with a clean dataset, there is a clear separation between the two distributions, indicating that the meta-model trained with the noisy dataset cannot effectively distinguish between noisy and clean tasks.
Additionally, training with noisy datasets results in higher clean task losses than clean training (blue line), while clean training results in higher noisy task losses than noisy training (red line). These illustrate the susceptibility of meta-learning to noisy tasks and how it can harm the meta-model's generalization ability, as indicated by the decline in test accuracy.

\noindent
\textbf{Vulnerability to the Outlier Tasks}. 
\textit{\underline{Setting}:} Two categories with multiple classes from the Tiered-ImageNet \cite{ren2018meta} dataset were randomly chosen as the target and outlier categories. Two meta-models were trained, the first one using 16 tasks randomly sampled from the target category as clean tasks in each iteration of the mini-batch, and the second one using 15 tasks from the target category (indexed 0 to 14) and 1 task from the outlier category (indexed as 15) in each iteration of the mini-batch. For the second one, we track the task with the highest loss in each iteration by recording its task index, and the frequency of each task index being the highest throughout the training process is plotted. The test accuracy of both models is also reported in Figure \ref{fig:sensitivity} (c). \textit{\underline{Analysis}:} The task from the outlier category frequently has the highest loss in each batch. 
Meta-learning is also vulnerable to outlier tasks, as evidenced by the lower test accuracy compared to the model trained without outliers.

\section{Method}
\label{sec:method}

HeTRoM is based on a simple observation that hard, noisy, or outlier tasks are usually associated with large task-level losses during meta-training, while most tasks have relatively small losses throughout the training process.
A similar observation can be made at the sample level in \cite{zhu2023robust, jiang2018mentornet, wang2020training}. As illustrated in Section \ref{sec:sensitive-motivation}, our findings also show that this phenomenon exists widely at the task level.
Let $\mathbb{L}:=\{\ell_1(\phi),...,\ell_n(\phi)\}$ be a set of all task losses on their corresponding query dataset. We also denote $\ell_{[i]}(\phi)$ as the $i$-th largest individual task loss after sorting the elements in set $\mathbb{L}$ (ties can be broken in any consistent way). With two integers $k$ and $m$, $0\leq m<k\leq n$, the HeTRoM loss is given by: 
\begin{equation}
    \begin{aligned}
    &\min_{\phi} \mathcal{L}_{\mathcal{D}}(\phi, \bw^*)=\textstyle\frac{1}{k-m}\sum_{i=m+1}^k \ell_{[i]}(\phi) \\
    &\text{s.t.} \ \ \bw^* = \arg\min_{\bw}\mathcal{L}_{\mathcal{S}}(\phi, \bw):=\textstyle\frac{1}{n}\sum_{i=1}^n\mathcal{L}_{\mathcal{S}_i}(\phi, w_i).
    \end{aligned}
\label{eq:aorr_meta}
\end{equation}
The HeTRoM loss is a generalization of ranked ranges in supervised learning \cite{hu2020learning}. It is equivalent to a dynamic task selection method that filters imperfect tasks based on their individual task losses. Specifically, it selects tasks with small losses, which are more likely to be normal tasks, and drops the top-$m$ tasks with large. It also excludes tasks with small losses (\ie, top-$k$ to top-$n$ losses), which are the majority of tasks that are normal and potentially redundant. Conventional meta-learning is a special case of HeTRoM with $k=n$ and $m=0$.

\subsection{Algorithm}
The ranking operation in HeTRoM can be eliminated using the following result. 
\begin{theorem}
Denote $[a]_+=\max\{0,a\}$ as the hinge function. Equation (\ref{eq:aorr_meta}) is equivalent to
\begin{align}\label{eq:aorr_meta_minimax}
    \min_{\phi, \lambda} \max_{\hat{\lambda}}& \hat{\mathcal{L}}_{\mathcal{D}}(\phi,\bw^*)= \textstyle\frac{1}{k-m}\sum_{i=1}^n\hat{\mathcal{L}}(\phi, w_i^*,\lambda,\hat{\lambda})\nonumber\\
    &:=\Big[[\ell_i(\phi)-\lambda]_+-[\ell_i(\phi)-\hat{\lambda}]_++\frac{k}{n}\lambda-\frac{m}{n}\hat{\lambda}\Big]\nonumber\\
    \emph{s.t.} \ \ \bw^* &= \arg\min_{\bw}\mathcal{L}_{\mathcal{S}}(\phi, \bw):=\textstyle\frac{1}{n}\sum_{i=1}^n\mathcal{L}_{\mathcal{S}_i}(\phi, w_i),
\end{align}
If the optimal of $\phi$ and $\bw^*$ are achieved, $(\ell_{[k]}(\phi),\ell_{[m]}(\phi))$ can be the optimum solutions of $(\lambda,\hat{\lambda})$.
\label{theorem1}
\end{theorem}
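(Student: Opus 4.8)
The plan is to eliminate the sorting operator by expressing each ranked partial sum through the classical variational identity for the sum of the $k$ largest values. The whole argument rests on a single lemma: for any reals $\ell_1(\phi),\dots,\ell_n(\phi)$ and any integer $0\le k\le n$,
\[
\sum_{i=1}^{k}\ell_{[i]}(\phi)=\min_{\lambda}\Big\{k\lambda+\sum_{i=1}^{n}[\ell_i(\phi)-\lambda]_+\Big\},
\]
with the minimum attained at $\lambda=\ell_{[k]}(\phi)$. Since the objective in Equation (\ref{eq:aorr_meta}) is the averaged ranked range $\frac{1}{k-m}\sum_{i=m+1}^{k}\ell_{[i]}(\phi)=\frac{1}{k-m}\big(\sum_{i=1}^{k}\ell_{[i]}(\phi)-\sum_{i=1}^{m}\ell_{[i]}(\phi)\big)$, applying this identity once with cutoff $k$ and once with cutoff $m$ produces exactly the two hinge sums appearing in Equation (\ref{eq:aorr_meta_minimax}).

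First I would establish the lemma by treating $g(\lambda):=k\lambda+\sum_{i=1}^{n}[\ell_i(\phi)-\lambda]_+$ as a convex, piecewise-linear function of the scalar $\lambda$. Its slope equals $k-|\{i:\ell_i(\phi)>\lambda\}|$ away from the breakpoints $\{\ell_i(\phi)\}$ and increases by one each time $\lambda$ crosses a loss value from below; hence the slope is negative while more than $k$ losses exceed $\lambda$ and nonnegative once at most $k$ do, so a global minimizer is reached precisely when exactly $k$ losses lie above $\lambda$, i.e. at $\lambda=\ell_{[k]}(\phi)$. Substituting this value and collecting the active hinge terms $\ell_{[j]}(\phi)-\ell_{[k]}(\phi)$ for $j<k$ telescopes to $\sum_{j=1}^{k}\ell_{[j]}(\phi)$, which proves the lemma and simultaneously identifies the minimizer.

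Next I would apply the lemma twice. The top-$k$ part gives the minimization in $\lambda$ with optimizer $\lambda=\ell_{[k]}(\phi)$, and the top-$m$ part gives the analogous minimization in $\hat\lambda$ with optimizer $\hat\lambda=\ell_{[m]}(\phi)$. Subtracting, and using $-\min_{\hat\lambda}f(\hat\lambda)=\max_{\hat\lambda}(-f(\hat\lambda))$, turns the second minimization into a maximization. Because the $\lambda$-objective and the $\hat\lambda$-objective share no variables, the two optimizations decouple and combine into a single $\min_{\lambda}\max_{\hat\lambda}$ with no order issue. Dividing by $k-m$ and absorbing the linear terms $k\lambda-m\hat\lambda$ into the per-task summand exactly as in Equation (\ref{eq:aorr_meta_minimax}) yields the claimed min-max form, with $(\ell_{[k]}(\phi),\ell_{[m]}(\phi))$ inherited as optimizers of $(\lambda,\hat\lambda)$.

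Throughout, the inner constraint $\bw^*=\arg\min_{\bw}\mathcal{L}_{\mathcal{S}}(\phi,\bw)$ is a spectator: the reformulation touches only the outer objective, and since $\ell_i(\phi)$ already depends on $\phi$ through $w_i^*$, the constraint is carried along verbatim in both formulations. The main obstacle I anticipate is the bookkeeping around ties and breakpoint behavior: when several losses coincide, the minimizer of $g$ is an interval rather than a point, so I would verify that $\ell_{[k]}(\phi)$ always lies in that interval (justifying the ``can be'' in the statement) and that the telescoping sum is insensitive to the tie-breaking rule. The remaining algebra — expanding the hinge sums and checking that $\frac{k}{n}\lambda$ and $\frac{m}{n}\hat\lambda$ summed over the $n$ tasks reproduce the constants $k\lambda$ and $m\hat\lambda$ — is routine.
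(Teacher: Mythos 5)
Your proposal is correct, and it follows the paper's proof of the theorem itself step for step: both arguments rest on the same variational identity $\sum_{i=1}^{k}\ell_{[i]}(\phi)=\min_{\lambda}\{k\lambda+\sum_{i=1}^{n}[\ell_i(\phi)-\lambda]_+\}$ (the paper's Lemma~B.1, from \citealt{hu2020learning}), applied once with cutoff $k$ and once with cutoff $m$, followed by negating the $\hat\lambda$-minimization into a maximization and observing that the decoupled $\lambda$ and $\hat\lambda$ problems merge into a single $\min_{\phi,\lambda}\max_{\hat\lambda}$. Where you genuinely diverge is in how the lemma is proved. The paper writes the top-$k$ sum as a linear program $\max_{\mathbf{p}}\,\mathbf{p}^\top\mathbb{L}$ subject to $\mathbf{p}^\top\mathbf{1}=k$, $\mathbf{0}\le\mathbf{p}\le\mathbf{1}$, and extracts the identity from the Lagrangian dual, with $\lambda$ arising as the multiplier of the equality constraint. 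You instead analyze $g(\lambda)=k\lambda+\sum_i[\ell_i(\phi)-\lambda]_+$ directly as a convex piecewise-linear function of the scalar $\lambda$, tracking how its slope $k-|\{i:\ell_i(\phi)>\lambda\}|$ changes across the breakpoints and verifying by substitution that $g(\ell_{[k]}(\phi))=\sum_{j=1}^{k}\ell_{[j]}(\phi)$. Your route is more elementary (no duality machinery, no KKT conditions) and has the advantage of making the minimizing \emph{set} explicit, which is exactly what is needed to handle ties cleanly and to justify the theorem's careful phrasing that $(\ell_{[k]}(\phi),\ell_{[m]}(\phi))$ ``can be'' the optimum rather than ``is'' --- a point the paper's duality proof passes over silently. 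The LP-duality route, in exchange, generalizes more readily (e.g.\ to weighted or continuous analogues) and explains conceptually why a single scalar dual variable suffices. One small caution: your statement that the minimum is attained ``precisely when exactly $k$ losses lie above $\lambda$'' is slightly off under ties (the minimizing set is the interval where at most $k$ losses strictly exceed $\lambda$ and at least $k$ are $\ge\lambda$), but you flag this yourself and your planned verification that $\ell_{[k]}(\phi)$ lies in that interval resolves it.
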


The proof of Theorem \ref{theorem1} can be found in Appendix \ref{sec:proofs}. A bi-level optimization procedure \cite{ji2020bilevel} can be used to solve (\ref{eq:aorr_meta_minimax}). Specifically, in each iteration $j$, we sample a batch of $N$ tasks $\{\mathcal{T}_i\}_{i=1}^N$, and then each sampled task $\mathcal{T}_i$ conducts $D$ steps of inner-loop updates as 
\begin{align}
    w_{i,j}^t = w_{i,j}^{t-1}-\alpha \nabla_{w} \mathcal{L}_{\mathcal{S}_i}(\phi_j,w_{i,j}^{t-1}),\quad t=1,....,D 
\end{align}
where $\alpha$ is the inner-loop step size, $w_{i,j}^t$ is the task $\mathcal{T}_i$'s parameter at iteration $j$ in step $t$, and $\nabla_{w} \mathcal{L}_{\mathcal{S}_i}$ is the gradient of $\mathcal{L}_{\mathcal{S}_i}$ with respect to (w.r.t.) $w$. Then, we need to update $\lambda$ and $\hat{\lambda}$ together with the meta-model to keep track of the ranked range of task losses on the query set from top-$m$ to top-$k$. To optimize the minimax problem in upper-level objective of equation~\eqref{eq:aorr_meta_minimax}, we use single gradient steps to update $\lambda$ and $\hat{\lambda}$ simultaneously using their (sub)gradients given by 
\begin{equation}
    \begin{aligned}
        &\partial_{\lambda}\hat{\mathcal{L}}(\phi_j, w_{i,j}^D,\lambda_j,\hat{\lambda}_{j})= \frac{k}{N} - \mathbb{I}_{[\ell_i(\phi_j, w_{i,j}^D)>\lambda_{j}]}
        , \\ &\partial_{\hat{\lambda}}\hat{\mathcal{L}}(\phi_j, w_{i,j}^D,\lambda_j,\hat{\lambda}_j) =  \mathbb{I}_{[\ell_i(\phi_j, w_{i,j}^D)>\hat{\lambda}_j]}-\frac{m}{N}
        \label{eq:grad_lambda_lambda_hat}
    \end{aligned}
\end{equation}
where $\ell_i(\phi_j, w_{i,j}^D)$ is the task $\mathcal{T}_i$'s loss on its query set using iteration $j$'s meta-model parameters $\phi_j$ and task parameter $w_{i,j}^D$, and $\mathbb{I}_{[A]}$ is an indicator function, which returns $1$ if the condition $A$ is true. 
Finally, we adopt the stable iterative differentiation (ITD)-based approach~\cite{grazzi2020iteration} to efficiently approximate the gradient of $\hat{\mathcal{L}}_{\mathcal{D}}(\phi,\bw^*)$ w.r.t.~$\phi$ (which refers to as the hyper-gradient because of the implicit dependence of $\mathbf{w}^*$ on $\phi$). In specific, 
we first evaluate the upper-level function 
over the inner-loop output $w_{i,j}^{D}$, and compute the hyper-gradient estimate as  
\begin{align}\label{eq:gradiant_meta_model}
\text{Hg}(\phi_j):=\frac{1}{k-m}\frac{\partial \sum_{i=1}^N\hat{\mathcal{L}}(\phi_j, w^D_{i,j},\lambda_j,\hat{\lambda}_j)}{\partial \phi_j}
\end{align}
by conducting back-propagation over inner-loop trajectory w.r.t.~$\phi_j$. Following \cite{finn2017model,ji2021bilevel}, the first-order hyper-gradient approximation $\frac{1}{k-m}\sum_{i=1}^N\partial_\phi\hat{\mathcal{L}}(\phi_j, w^D_{i,j},\lambda_j,\hat{\lambda}_j)$ is employed to avoid the computation cost introduced by computing implicit derivative $\frac{\partial w_{i,j}^D}{\partial \phi_i}$ that contains second-order model information with a large model size.  
We show the entire optimization procedure of our method in Algorithm \ref{alg:1}.

\begin{algorithm}[t]
	\renewcommand{\algorithmicrequire}{\textbf{Require:}}
	\caption{Optimization procedure of HeTRoM}
	\label{alg:1}
	\begin{algorithmic}[1]
 \STATE {\bfseries Input:}  Total iteration number $J$; inner-loop iteration number $D$; batch size $N$; stepsizes $\alpha, \beta, \gamma $, initial meta-parameter $\phi_0$; initial task-specific parameter $w_0$; hyper-parameters $k, m$; $\lambda_0, \hat{\lambda}_0$.
            \FOR{$j=0,1,2,...,J$}
            \STATE{Sample a batch of tasks $\{\mathcal{T}_i\}_{i=1}^N$}
            \FOR{each task $\mathcal{T}_i$ in $\{\mathcal{T}_i\}_{i=1}^N$}
                \STATE{Initialize task-specific parameter as $w_{i,j}^{0} = w_0$}
                \FOR{$t=1,2,...,D$}
                \STATE{Update $w_{i, j}^t = w_{i, j}^{t-1}-\alpha \nabla_{w} \mathcal{L}_{\mathcal{S}_i}(\phi_j,w_{i, j}^{t-1}) $}
                \ENDFOR
            \ENDFOR
                \STATE{
                \scalebox{0.8}{
                $\begin{pmatrix}
                    \lambda_{j+1}\\ 
                    \hat{\lambda}_{j+1}
                    
                    \end{pmatrix} \leftarrow \begin{pmatrix}
                    \lambda_{j}\\ 
                    \hat{\lambda}_{j}
                    \end{pmatrix}-\frac{\gamma}{k-m} \begin{pmatrix}
                    \sum_{i=1}^N \partial_{\lambda}\hat{\mathcal{L}}(\phi_j, w_{i,j}^D,\lambda_j,\hat{\lambda}_{j})\\ 
                    -\sum_{i=1}^N \partial_{\hat{\lambda}}\hat{\mathcal{L}}(\phi_j, w_{i,j}^D,\lambda_j,\hat{\lambda}_{j})
                    \end{pmatrix}$
                } based on equation (\ref{eq:grad_lambda_lambda_hat})}
		        \STATE{Update meta-parameter  $\phi_{j+1}=\phi_j - \beta\text{Hg}(\phi_j)$ based on equation~\eqref{eq:gradiant_meta_model}}
            \ENDFOR
        \end{algorithmic}  
\end{algorithm}

In practice, we follow the results obtained from Theorem \ref{theorem1} and initialize $\lambda$ and $\hat{\lambda}$ through adaptation on a batch of randomly sampled tasks. Then, we sort the loss on the corresponding query sets and assign the $k$-th loss and $m$-th loss values to $\lambda$ and $\hat{\lambda}$, respectively.

\subsection{Determining the hyper-parameters}
\label{sec:statistic-guided-learning}

The values of hyper-parameters $k$ and $m$ are important to the overall performance of the model trained by Algorithm \ref{alg:1}. We describe three practical measures to find their values. 

\noindent
\textbf{Warm-up}. Based on Theorem \ref{theorem1}, we can determine how many task losses exceed the values of $\lambda$ and $\hat{\lambda}$ by analyzing the values of $k$ and $m$. Specifically, we use the conventional meta-learning approach to ``warm up'' the model based on equation (\ref{eq:meta_learning}) for a few iterations by training on the training data, which may contain noisy or outlier tasks. This procedure is similar to Algorithm \ref{alg:1}. In particular, we have no $\lambda$ and $\hat{\lambda}$ that need to be updated but we update meta-parameter $\phi$ using $\phi_{j+1}=\phi_j - \beta\overline{\text{Hg}}(\phi_j)$, where $\overline{\text{Hg}}(\phi_j):=\frac{1}{N}\frac{\partial \sum_{i=1}^N \ell_i(\phi_j, w^D_{i,j})}{\partial  \phi_j}$. 

\noindent
\textbf{Identification}. After the ``warm-up'', we fast adapt the meta-model with parameters $\phi_{j+1}$ to all training tasks $\{\mathcal{T}_i\}_{i=1}^n$ using $D$ steps of the inner-loop for each of them. Then we obtain the new trained task-specific parameters $\{\Tilde{w}_i^D\}_{i=1}^n$. Next, we calculate all training task losses on their corresponding query sets so that we get $\{\ell_i(\phi_{j+1}, \Tilde{w}_i^D)\}_{i=1}^n$. A histogram based on $\{\ell_i(\phi_{j+1}, \Tilde{w}_i^D)\}_{i=1}^n$ is used to describe the training behaviors of all tasks. We can use different distributions to fit the histogram. However, in practice, the log-normal distribution can be a better fit. In addition, it is also popular to use log-normal distribution to fit training loss as \cite{holland2023flexible} in machine learning. Therefore, we take a log of all task losses as $\overline{\mathbb{L}}:=\{\log\ell_i(\phi_{j+1}, \Tilde{w}_i^D)\}_{i=1}^n$ and use their mean $\text{Mean}(\overline{\mathbb{L}})$ and standard deviation $\text{Std}(\overline{\mathbb{L}})$ as the descriptor for the statistic of task losses. In order to identify potential easy tasks and noisy or outlier tasks, we must define the thresholds that differentiate them from others. Specifically, we define the thresholds as:
\begin{align}\label{eq:two-rhos}
    &\lambda^*= \exp(\text{Mean}(\overline{\mathbb{L}}) - \rho_1\text{Std}(\overline{\mathbb{L}})), \\
    &\hat{\lambda}^* = \exp(\text{Mean}(\overline{\mathbb{L}}) + \rho_2\text{Std}(\overline{\mathbb{L}})),
\end{align}
where $\rho_1$ and $\rho_2$ are hyper-parameters. 
In practice, we find that setting them within the range $[1, 3]$ yields good results.

\noindent
\textbf{Distillation}. After we get the thresholds, we continue to train the meta-model obtained from the warm-up procedure. Thus, the meta-model parameters should be $\phi_{j+1}$. Then, we use the thresholds to select tasks to update the meta-model parameters dynamically. In particular, tasks with their losses fall into [$\lambda^*$, $\hat{\lambda}^*$] are regarded as useful tasks that benefit learning.  The learning procedure is also similar to Algorithm \ref{alg:1}. However, in the upper-level loop, we only need to use the following gradient to update meta-model parameters $\phi$:
\begin{align}
    \widetilde{\text{Hg}}(\phi_{j+1}):= \textstyle\frac{1}{|\mathcal{S}_{\ell}|} \frac{ \partial \sum_{\ell_i \in \mathcal{S}_{\ell}} \ell_i}{\partial \phi_{j+1}}
    \label{static-gradient}
\end{align}
where $\mathcal{S}_{\ell}=\mathbb{I}_{[\lambda^*\leq \ell_i(\phi_{j+1}, w_{i,j}^D) \leq \hat{\lambda}^*]}$ is the set containing task losses that fall between $\lambda^*$ and $\hat\lambda^*$.

The corresponding Algorithm \ref{alg:2} is available in Appendix \ref{sec:alg2}. We call this approach statistic-guided HeTRoM (sg-HeTRoM). The significance of the thresholds can be more clearly understood as follows: $\lambda^*$ and $\hat{\lambda}^*$ can be regarded as the optimal values of $\lambda$ and $\hat{\lambda}$. In each iteration of the distillation process, we eliminate the impact of noisy or outlier tasks on the meta-parameter updates by using the upper bound $\hat{\lambda}^*$ of the task losses. Additionally, we enhance the meta-model's generalization ability by potentially removing overly simple tasks based on the lower bound $\lambda^*$, which is akin to a hard task mining strategy.

\begin{table*}[t]
\renewcommand\arraystretch{1.1}
\center
\scalebox{0.68}{
\begin{tabular}{c|c|c|cc|ccc|ccc}
\hline
 \multirow{2}{*}{Dataset}& \multirow{2}{*}{Model}& \multirow{2}{*}{Method} & \multicolumn{2}{c|}{Clean} &  \multicolumn{3}{c|}{Noisy (5-ways 5-shots)} & \multicolumn{3}{c}{Outlier (3-ways 5-shots)} \\
&   &  & 5-ways 5-shots & 5-ways 1-shot  & Fixed  &  1-step  & 2-step   &  ratio=0 & ratio=0.1  & ratio=0.3\\ \hline
\multirow{8}{*}{\rotatebox{90}{\makecell{Mini-\\ImageNet}}} & \multirow{5}{*}{CNN4} & MAML \cite{finn2017model} &$63.11 \pm 0.91 $ & $48.70\pm 1.75 $ & $60.25$ & $55.20$ & $30.30$ & $78.05$ & $77.44$ & $77.29$ \\
& &  ATS \cite{yao2021meta} & $61.60 \pm 0.69$ & $50.75 \pm 0.73$ & $59.89$ & $60.09$ & $59.50$ & $71.09$ & $70.80$ & $69.84$ \\
& & Eigen-Reptile\cite{chen2022robust} & $69.85 \pm 0.85   $ & $53.25 \pm 0.45 $ & $63.47$ & $61.78$ & $49.63$ & $69.56$ & $69.50$ & $67.95$ \\ \cline{3-11}

& & HeTRoM (Ours) & \colorbox{llgray}{$\textbf{70.45} \pm \textbf{0.71 }$} &  \colorbox{llgray}{$59.75  \pm 1.28$} & \colorbox{llgray}{$\textbf{68.75}$} & \colorbox{llgray}{$67.30$} & \colorbox{llgray}{$61.45$} & \colorbox{llgray}{$\textbf{81.25}$} & \colorbox{llgray}{$78.96$} & \colorbox{llgray}{$\textbf{80.63}$}\\
& & sg-HeTRoM (Ours) & \colorbox{llgray}{$70.08 \pm 1.07$} & \colorbox{llgray}{$\textbf{60.50} \pm \textbf{2.03}$} & \colorbox{llgray}{$67.58$} & \colorbox{llgray}{$\textbf{68.00}$} & \colorbox{llgray}{$\textbf{67.67}$} & \colorbox{llgray}{$79.86$} & \colorbox{llgray}{$\textbf{79.17}$} & \colorbox{llgray}{${78.12}$} \\ \cline{2-11}

& \multirow{3}{*}{ResNet12} & MAML \cite{finn2017model} & $69.54 \pm 0.38$ & $58.60 \pm 0.42$ & $64.50$ & $58.25$ & $44.75$ & $79.30$ & $77.78$ & $77.29$ \\ \cline{3-11}
& &  HeTRoM (Ours) & \colorbox{llgray}{$\textbf{75.35} \pm \textbf{0.82}$} & \colorbox{llgray}{$\textbf{60.50} \pm \textbf{2.32}$} & \colorbox{llgray}{$\textbf{71.25}$} & \colorbox{llgray}{$\textbf{64.50}$} & \colorbox{llgray}{$\textbf{54.33}$} & \colorbox{llgray}{$\textbf{87.35}$}& \colorbox{llgray}{$\textbf{86.21}$} & \colorbox{llgray}{$\textbf{80.77}$}\\ 
& & sg-HeTRoM (Ours) & \colorbox{llgray}{$71.87 \pm 1.93$ } & \colorbox{llgray}{$59.38 \pm 1.84$} & \colorbox{llgray}{$70.13$} & \colorbox{llgray}{$\textbf{67.25}$} & \colorbox{llgray}{$\textbf{59.75}$} & \colorbox{llgray}{$86.18$} & \colorbox{llgray}{$85.50$} & \colorbox{llgray}{$80.01$} \\
\hline

\multirow{8}{*}{\rotatebox{90}{\makecell{Tiered-\\ImageNet}}} & \multirow{5}{*}{CNN4} & MAML \cite{finn2017model} &  $66.25 \pm 0.19 $ & $50.98 \pm 0.26 $ & $34.50$ & $31.25$ & $31.50$ & $63.92$ & $62.83$ & $62.00$  \\
& & ATS\cite{yao2021meta} &  $60.44 \pm 0.76$ & $43.25 \pm 0.84$ & $58.72$ & $57.12$ & $49.06$ & $48.69$ & $48.64$ & $48.02$ \\
& & Eigen-Reptile \cite{chen2022robust} & $57.36 \pm 0.19$ & $42.24 \pm 0.18$ & $56.05$ & $53.37$ & $52.16$ & $59.01$ & $57.17$ & $55.41$ \\ \cline{3-11}
& & HeTRoM (Ours) & \colorbox{llgray}{$\textbf{71.25} \pm \textbf{0.57 }$} & \colorbox{llgray}{$58.50 \pm 1.07 $} & \colorbox{llgray}{$\textbf{68.75}$} & \colorbox{llgray}{$66.00$} & \colorbox{llgray}{$\textbf{67.75}$} & \colorbox{llgray}{$67.07$} & \colorbox{llgray}{$66.75$} & \colorbox{llgray}{$\textbf{65.50}$} \\
& & sg-HeTRoM (Ours) & \colorbox{llgray}{$70.80 \pm 1.74$} & \colorbox{llgray}{$\textbf{58.75} \pm \textbf{0.98}$} & \colorbox{llgray}{$68.67$} & \colorbox{llgray}{$\textbf{69.33}$} & \colorbox{llgray}{$66.83$}& \colorbox{llgray}{$\textbf{69.17}$} & \colorbox{llgray}{$\textbf{69.37}$} & \colorbox{llgray}{$64.17$}\\ \cline{2-11}
& \multirow{3}{*}{ResNet12} & MAML \cite{finn2017model} & $71.24 \pm 0.43$ & $58.58 \pm 0.49$ & $67.00$ & $68.25$ & $54.63$ & ${65.83}$ & $61.33$ & $ 51.67$  \\ \cline{3-11}
& & HeTRoM (Ours) & \colorbox{llgray}{$\textbf{72.25} \pm \textbf{1.29}$} & \colorbox{llgray}{$\textbf{61.25} \pm \textbf{0.82}$} & \colorbox{llgray}{$69.17$} & $64.88$ & \colorbox{llgray}{$67.75$} & \colorbox{llgray}{$66.25$} & \colorbox{llgray}{$\textbf{65.00}$} & \colorbox{llgray}{$58.75$} \\
& & sg-HeTRoM (Ours) & \colorbox{llgray}{$71.25 \pm 0.74$} & \colorbox{llgray}{$58.75 \pm 1.64$} & \colorbox{llgray}{$\textbf{69.33}$} & \colorbox{llgray}{$\textbf{70.25}$} & \colorbox{llgray}{$\textbf{68.25}$} & \colorbox{llgray}{$\textbf{67.08}$} & \colorbox{llgray}{$64.17$}  & \colorbox{llgray}{$\textbf{59.16}$}\\
\hline
\end{tabular}
}
\caption{\label{table1} \textit{Test accuracy (\%) on Mini-ImageNet and Tiered-ImageNet. For the clean setting, we report the average accuracy with 95\% confidence intervals. 
The best results are shown in \textbf{Bold}. \colorbox{llgray}{Gray} indicates our methods outperform baseline methods in the same setting.}}  
\end{table*}

\section{Experiments}



\subsection{Experimental Settings}
\label{experimental-settings}

\noindent
\textbf{Datasets.} {We conduct our experiments in a few-shot classification setting, where two widely recognized datasets are used for few-shot classification: Mini-ImageNet \cite{vinyals2016matching}, and Tiered-ImageNet \cite{ren2018meta}.} Both are subsets of the ILSVRC-12 dataset \cite{russakovsky2015imagenet}. Mini-ImageNet comprises $100$ classes, with each class containing $600$ images of $84 \times 84$ pixel dimensions. The dataset is divided into three subsets: $64$ classes for training, $16$ for validation, and $20$ for testing. On the other hand, Tiered-ImageNet is a more substantial dataset, containing $779,165$ images annotated across $608$ classes. These classes are further grouped into $34$ categories. This dataset is split at the category level, with $20$ categories used for training, $6$ for validation, and $8$ for testing.

\noindent
\textbf{Baselines.} We showcase the effectiveness of our approach by contrasting it with recent optimization-based meta-learning methods \textbf{aiming at enhancing the robustness to heterogeneous tasks} of meta-training, such as ATS \cite{yao2021meta} and Eigen-Reptile \cite{chen2022robust}, Consequently, we do not directly compare our method with strategies that solely aim at boosting the meta-model's performance~\cite{baik2021meta, baik2020meta, collins2020task}. NestedMAML \cite{killamsetty2022nested} is not included in our comparison, as the original paper neither provides source code nor records performance on prevalent datasets like Mini-ImageNet and Tiered-ImageNet. Moreover, we examine the impact of our HeTRoM loss on the issues previously mentioned in the context of representation-based meta-learning methods. Importantly, our investigation is conducted without relying on training tracks, as is the case in \cite{antoniou2018train}. As a result, we benchmark our method against the MAML approach, which is widely acknowledged as a prevalent method in meta-learning. We do not compare with the most recent optimization-based meta-learning methods because our approach is versatile and can readily incorporate these methods. 
The detailed information on implementation can be found in Appendix \ref{sec:add-exp-details}.


\subsection{Results}

\noindent
\textbf{Performance on Clean Tasks.}
We report the overall results (accuracy on the testing set) based on clean tasks in Table \ref{table1}. Specifically, we use the HeTRoM loss and the statistically driven method to filter out non-informative and easy tasks during training dynamically. We achieve a significant improvement in performance over the MAML method, with a $[1.9\%,11.8\%]$ margin. In addition, our method outperforms two robust meta-learning methods ATS and Eigen-Reptile on both datasets and two different few-shot settings (\ie, 5-ways 5-shots and 5-ways 1-shot).

\begin{figure}[t]
  \centering
  \includegraphics[width=1\linewidth]{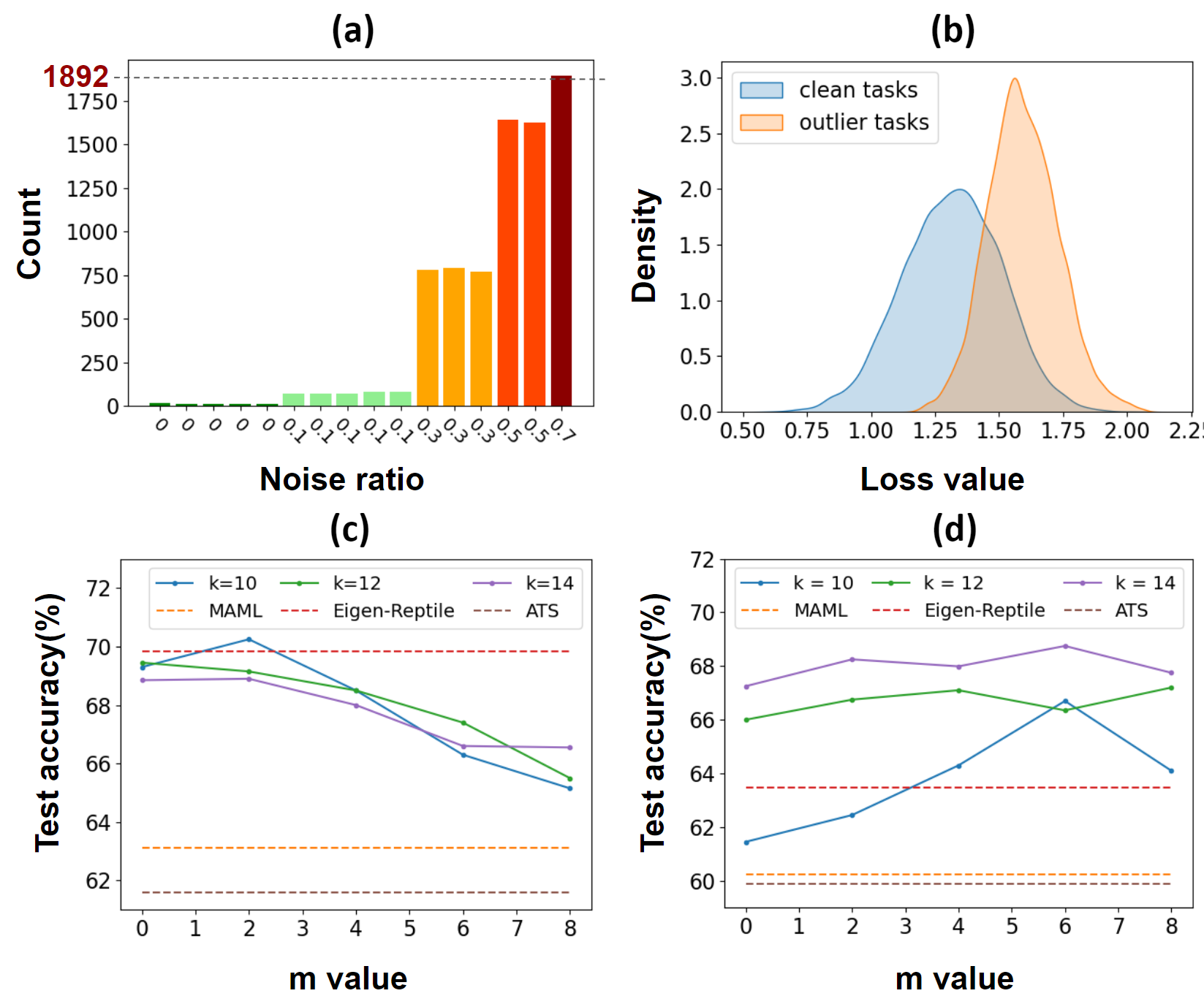}
  \caption{\it (a) Frequency of the noise ratio for each task in a mini-batch being excluded by $\hat{\lambda}$ during meta-training. (b) Loss distribution of clean and outlier tasks when conducting fast adaptation with a trained meta-model. (c) Test accuracy with varying $k$ and $m$ in clean task setting. (d) Test accuracy with varying $k$ and $m$ in fixed noisy task setting.}
  \label{fig:result_figure}
\end{figure}

\noindent
\textbf{Performance on Noisy Tasks.} Table \ref{table1} also shows the results of all methods on noisy task settings of 5-ways 5-shots classification. It is clear that all methods experience a significant decrease in performance in the presence of noise compared to the clean task scenario, but our methods still perform better than the others. As we move from Mini-ImageNet to Tiered-ImageNet, the variety of tasks increases and MAML loses its generalization ability. This may be attributed to the fact that the number of easy tasks has increased significantly.
On the other hand, ATS and Eigen-Reptile perform relatively well but still show a decreasing trend. In contrast, our methods consistently maintain stable or even better performance, indicating their robustness to the diversity of tasks in the presence of noisy tasks. 

We then analyze in detail how HeTRoM works to enhance the robustness of MAML in a noisy task scenario under the \textit{fixed} setting. We can keep track of the noise ratio for each task within a mini-batch since each task is indexed and fixed with one noisy ratio. This means that we can record the frequency of the noise ratio for each task excluded by $\hat\lambda$ (\ie, identified as a noisy task) during meta-training. Figure \ref{fig:result_figure} (a) displays this information. Our HeTRoM method can identify noisy tasks during meta-training, as demonstrated by the higher frequency of exclusion of tasks with a higher noise ratio. Clean tasks are rarely excluded, providing valuable information for training. Note that we do not require our method to exclude noisy tasks precisely in each iteration, which is not practical for an efficient method.

\noindent
\textbf{Performance on Outlier Tasks.} Table \ref{table1} presents a comparison of our methods with other approaches on the outlier tasks setting of 3-ways 5-shots. We can observe that the performance of MAML shows a decreasing trend when increasing the ratio of outlier tasks. ATS and Eigen-Reptile fail to achieve competitive performance with MAML on both datasets, while our method demonstrates superior performance compared to MAML by a substantial margin. This can be attributed to the effective filtering of outlier tasks and too-easy tasks.
To provide additional validation for the efficacy of removing outlier tasks during meta-training using our approach, we randomly sample 10,000 tasks from the Mini-ImageNet dataset (considered as clean) and the Tiered-ImageNet dataset (considered as outlier) respectively. A meta-model trained by our method performs fast adaptation on all these tasks, and the distribution of task losses is depicted in Figure \ref{fig:result_figure} (b). The results exhibit a clear trend wherein the task loss of outlier tasks tends to be larger than that of clean tasks, justifying the use of loss as a criterion for identifying outlier tasks.

\begin{table}[t]
    \renewcommand\arraystretch{1.2}
\centering
    \scalebox{0.6}{
    \begin{tabular}{c|c|c|ccc}
        \hline
        \multirow{2}{*}{Architecture} & \multirow{2}{*}{Setting} & \multirow{2}{*}{Clean} & \multicolumn{3}{c}{Noisy }                          \\ 
                                      &                          &                        & Fixed & 1-step & 2-step \\ \hline
        \multirow{2}{*}{CNN4}         & w/o HeTRoM              & $69.61\pm 0.38$                & $67.36 \pm 0.38$          & $67.06 \pm 0.39$           & $63.35 \pm 0.36$           \\
                                      & w/HeTRoM      & $\textbf{69.91} \pm \textbf{0.37}$                  & $\textbf{67.65} \pm \textbf{0.39}$          & $\textbf{69.23}\pm \textbf{0.38}$           & $\textbf{65.23} \pm \textbf{0.38}$           \\ \hline
        \multirow{2}{*}{ResNet12}     & w/o HeTRoM               & $78.63 \pm 0.46$                  & $76.76 \pm 0.40$         & $75.18 \pm 0.40$          & $72.40 \pm 0.38$          \\
                                      & w/HeTRoM      & $\textbf{78.97}\pm\textbf{0.34}$                 & $\textbf{78.61}\pm\textbf{0.35}$          & $\textbf{78.45}\pm\textbf{0.35}$           & $\textbf{75.95}\pm\textbf{0.36}$          \\ \hline
    \end{tabular}
    }
\caption{\label{tab:res-metaoptnet} \textit{Test accuracy (\%) on Mini-ImageNet when incorporated with MetaOptNet \cite{lee2019meta}. \textbf{Bold} indicates best results for each setting.}}  
\end{table}

\noindent
\textbf{Performance of sg-HeTRoM method.} 
From Table \ref{table1}, we can also observe the performance comparison between models trained with HeTRoM and sg-HeTRoM. The results indicate no significant performance gap between two methods. Moreover, both methods outperform all baselines, highlighting the effectiveness of leveraging statistical information to identify informative tasks and filter out noisy and outlier tasks. Furthermore, using a large model ResNet12 leads to improved performance in most settings. Nevertheless, our method still outperforms MAML.

\noindent
\textbf{Incorporating with SOTA Method.} To ascertain the efficacy of our approach in conjunction with other state-of-the-art (SOTA) methodologies, we integrated our method with MetaOptNet\cite{lee2019meta} in both clean task and noisy task settings using Mini-ImageNet dataset. The results are shown in Table \ref{tab:res-metaoptnet}. It is evident from the results that the performance of MetaOptNet experiences a significant decline when confronted with noisy tasks. In contrast, our method demonstrates a mitigating effect on this performance degradation, particularly noteworthy in experiments with larger model (ResNet 12 in Table \ref{tab:res-metaoptnet}). More results on incorporating other SOTA methods can be found in Appendix \ref{sec:add-exp-results}.

\subsection{Discussion}
\noindent
\textbf{$k$ and $m$ in Clean Setting.} We analyze the effect on training with a variety of $k$ and $m$ in the clean task setting. As common settings, we conduct meta-training on the original Mini-ImageNet dataset with $k$ chosen from $\{10, 12, 14\}$, and $m$ chosen from $\{0, 2, 4, 6, 8\}$. The results are shown in Figure \ref{fig:result_figure}(c). The experimental results suggest two key findings:  firstly, the best performance for each $k$ is achieved with a small $m$ value (\eg, $m=2$), and performance diminishes with increasing $k$, indicating that easy tasks mainly influence meta-learning. Secondly, increasing $m$ for a fixed $k$ results in performance declines, underscoring the beneficial effect of incorporating hard tasks in training.


\noindent
\textbf{$k$ and $m$ in Noisy Setting.} We study the impact of hyper-parameters in the noisy task settings by conducting experiments in \textit{fixed} noisy setting with a variety of $k$ and $m$ on Mini-ImageNet. The results are shown in Figure \ref{fig:result_figure} (d). As $m$ increases, there is a clear trend of improved performance, which can be attributed to the exclusion of more corrupted tasks from training, given the fact that the majority of tasks in a mini-batch are affected by varying levels of noise in \textit{fixed} noise setting. Additionally, a larger value of $k$ generally leads to higher performance, as it includes more clean tasks during training. Moreover, there is a clear range of $k$ and $m$ that our method outperforms the compared methods.

\noindent
\textbf{$\rho_1$ and $\rho_2$ in Outlier Setting.} We conducted experiments on the Tiered-ImageNet dataset with the outlier setting (ratio=0.1) using sg-HeTRoM to study the impact of hyper-parameters $\rho_1$ and $\rho_2$ in equation (\ref{eq:two-rhos}), which determine the thresholds for identifying easy tasks and outlier tasks. The results are presented in Table \ref{table:rho}. 
We observe that sg-HeTRoM is not overly sensitive to $\rho_1$ and $\rho_2$. 
If we set $\rho_2=\infty$, changing the value of $\rho_1$ does not seem to affect the performance significantly. This could be because the outlier tasks significantly impact meta-learning more than the easy tasks. However, when we set $\rho_1=\infty$, and $\rho_2$ varies, there is a visible fluctuation in the performance. This implies that $\rho_2$ is useful in mitigating the effect of outlier tasks.

\begin{table}[t]
    \renewcommand\arraystretch{1.2}
\centering
    \scalebox{0.8}{
        \begin{tabular}{c|cccc} 
            \hline
            & $\rho_1=1$ & $\rho_1=2$ & $\rho_1=3$ & $\rho_1=\infty$\\ \hline
            $\rho_2=1$ & $67.50$ & $69.17$ & $67.29$ &  $68.54$\\
            $\rho_2=2$ & $68.33$ & $68.75$ & $\textbf{69.37}$ &  $67.61$\\
            $\rho_2=3$ & $67.08$ & $65.83$ & $67.50$ &  $67.92$\\
            $\rho_2=\infty$ & $67.08$ & $66.77$ & $67.67$ & $67.37$\\
            \hline
        \end{tabular} 
    }
\caption{\textit{Effect of $\rho_1$ and $\rho_2$ on sg-HeTRoM in outlier task setting. $\rho_1=\infty$ means no lower bound for task selection, and $\rho_2=\infty$ means no upper bound for task selection. }}\label{table:rho}
\end{table}

\section{Conclusion}
Meta-learning provides a solution to handle few-shot scenarios when dealing with many tasks. Existing optimization-based meta-learning methods operate on the premise that all tasks are equally important. However, this assumption is often challenged in real-world scenarios where tasks are inherently heterogeneous, characterized by varying levels of difficulty, noisy training samples, or a significant divergence from the majority of tasks. In response to these challenges, we have developed a novel meta-learning approach that proficiently manages these heterogeneous tasks using rank-based task-level learning objectives. A key advantage of our method is its ability to handle outlier tasks effectively, and it prevents an over-representation of similar tasks in influencing the meta-learner. Our method also employs an efficient iterative optimization algorithm grounded in bi-level optimization. Our experimental evaluations demonstrate the flexibility of our approach, as it enables users to adapt to a variety of task settings and improves the overall performance of the meta-learner. 

We plan to enhance our method in several ways in the future. Firstly, we aim to expand it to online meta-learning, where the model adjusts and learns in a real-time, sequential manner with the incoming data flow. Secondly, we intend to investigate its theoretical guarantees, including optimization error and generalization bound. Thirdly, we will test the effectiveness of our methods for other applications where meta-learning is helpful, such as object detection, speech recognition, and NLP.


\bibliographystyle{IEEEtran}

\appendix
\numberwithin{equation}{section}
\numberwithin{theorem}{section}
\numberwithin{figure}{section}
\numberwithin{table}{section}
\renewcommand{\thesection}{{\Alph{section}}}
\renewcommand{\thesubsection}{\Alph{section}.\arabic{subsection}}
\renewcommand{\thesubsubsection}{\Roman{section}.\arabic{subsection}.\arabic{subsubsection}}

\newtheorem{lemma}{Lemma}
\newtheorem{proof}{Proof}

\def\p{\mathbf{p}}
\def\v{\mathbf{v}}
\def\u{\mathbf{u}}

\setcounter{secnumdepth}{2}

{\bf \huge  Appendix}
\setcounter{section}{0}
\section{Settings for the Experiments in Section \ref{sec:sensitive-motivation}} \label{appendix:settings}

\noindent
\textbf{Influence of Majority Tasks.} In this setting, we use CNN4 as a backbone and train 2 meta-models on the Mini-ImageNet dataset with a mini-batch of size 16. During the meta-training phase, the two meta-models are updated iteratively using the entire mini-batch of tasks (traditional learning) and tasks with the top-8 highest losses (hard task mining) in each iteration.  After 2000 iterations of training, we perform fast adaptation on all the training tasks and examine the distribution of task losses on the query set. Additionally, the test accuracy of both meta-models on the same test dataset is presented in Figure \ref{fig:sensitivity}(a).

\noindent
\textbf{Sensitive to Noisy Tasks.} In this setting, we utilize two distinct datasets: clean dataset and noisy dataset. The clean dataset consists of 20,000 tasks generated from the original Mini-ImageNet dataset, while the noisy dataset involves corrupting 60\% of the tasks by randomly flipping all labels within each task. We train two separate meta-models on these datasets and subsequently perform fast adaptation on all tasks within the noisy dataset. Next, we separate the task losses of clean tasks (40\% of all tasks) and noisy tasks (60\% of all tasks) and show their respective distribution in Figure \ref{fig:sensitivity} (b). Note that the test accuracy shown in the figure corresponds to the performance evaluation conducted on the same clean test dataset.

\noindent
\textbf{Vulnerable to Outlier Task.} As described in \ref{experimental-settings}, the Tiered-ImageNet dataset is divided into categories, each containing a series of classes. In this setting, we choose two categories as target category and outlier category respectively. Two meta-models were trained, the first one using 16 tasks randomly sampled from the target category as clean tasks in each iteration of the mini-batch, and the second one using 15 tasks from the target category (indexed from 0 to 14) and 1 task from the outlier category (indexed as 15) in each iteration of the mini-batch. During training, we monitor the behavior of meta-training involving outlier tasks by recording the index of tasks with the highest task loss within the mini-batch in each iteration. Then we plot the frequency of each task index being the highest throughout the training process, which is shown in Figure \ref{fig:sensitivity} (c) together with the test accuracy of these two models on the same test dataset.

\section{Proof of Theorem \ref{theorem1}}\label{sec:proofs} 
To prove Theorem \ref{theorem1}, we first introduce the following lemma.
\begin{lemma} (\cite{hu2020learning})
    For a set of task losses $\mathbb{L} =\{\ell_1(\phi),...,\ell_n(\phi)\} $, and $\ell_{[i]}$ denotes the i-th largest loss after sorting the elements in $\mathbb{L}$, we have 
    \begin{align}
        \sum_{i=1}^k\ell_{[i]}(\phi)=min_{\lambda \in \mathbb{R}}\{k\lambda + \sum_{i=1}^n[\ell_i(\phi) - \lambda]_+\}. \nonumber
    \end{align}
    Furthermore, $\ell_{[k]}(\phi) \in \arg\min_{\lambda \in \mathbb{R}} \{ k\lambda + \sum_{i=1}^n[\ell_i(\phi) - \lambda]_+ \}$.
    \label{lemma1}
\end{lemma}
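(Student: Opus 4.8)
The plan is to recognize the right-hand side as the minimization of a one-dimensional convex piecewise-linear function of $\lambda$ and to locate its minimizer explicitly. Define $g(\lambda) := k\lambda + \sum_{i=1}^n [\ell_i(\phi) - \lambda]_+$. As the sum of an affine term and $n$ hinge functions, $g$ is convex and piecewise-linear in $\lambda$. First I would check coercivity: as $\lambda \to +\infty$ we have $g(\lambda) \sim k\lambda \to +\infty$, and as $\lambda \to -\infty$ each hinge behaves like $-\lambda$, so $g(\lambda) \sim (k-n)\lambda \to +\infty$ whenever $k < n$ (and $g$ is eventually constant when $k = n$). Hence a finite minimizer exists, and by convexity it is characterized by the first-order condition $0 \in \partial g(\lambda)$.

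Next I would compute the subdifferential. Writing $n_{>}(\lambda) := |\{i : \ell_i(\phi) > \lambda\}|$ and $n_{\geq}(\lambda) := |\{i : \ell_i(\phi) \geq \lambda\}|$, each hinge contributes slope $-1$ where $\ell_i > \lambda$, slope $0$ where $\ell_i < \lambda$, and a subgradient in $[-1,0]$ at the kink $\ell_i = \lambda$. Summing gives $\partial g(\lambda) = [\,k - n_{\geq}(\lambda),\ k - n_{>}(\lambda)\,]$, so the optimality condition $0 \in \partial g(\lambda)$ is equivalent to $n_{>}(\lambda) \leq k \leq n_{\geq}(\lambda)$. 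I would then verify that $\lambda = \ell_{[k]}(\phi)$ satisfies this: by definition of the $k$-th largest value, at most $k-1$ entries strictly exceed $\ell_{[k]}$, so $n_{>}(\ell_{[k]}) \leq k-1 < k$, while at least the top $k$ entries are $\geq \ell_{[k]}$, so $n_{\geq}(\ell_{[k]}) \geq k$. Both inequalities hold, giving $0 \in \partial g(\ell_{[k]})$ and proving $\ell_{[k]}(\phi) \in \arg\min_\lambda g(\lambda)$, which is the ``furthermore'' claim.

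Finally I would evaluate $g$ at this point to pin down the optimal value. Indexing by sorted order and noting that the hinge vanishes for $i > k$ (since $\ell_{[i]} \leq \ell_{[k]}$) while $[\ell_{[i]} - \ell_{[k]}]_+ = \ell_{[i]} - \ell_{[k]}$ for $i \leq k$, a short telescoping gives $g(\ell_{[k]}) = k\ell_{[k]} + \sum_{i=1}^k (\ell_{[i]} - \ell_{[k]}) = \sum_{i=1}^k \ell_{[i]}(\phi)$, matching the left-hand side and completing the proof.

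The only delicate point — the step I would treat most carefully — is the handling of ties, where several entries equal $\ell_{[k]}$. Working with the two counts $n_{>}$ and $n_{\geq}$ rather than a single derivative makes the subdifferential a genuine interval and lets the optimality condition absorb ties automatically; indexing the value computation by the sorted sequence $\ell_{[1]} \geq \cdots \geq \ell_{[n]}$ similarly avoids case analysis. As an alternative route I could derive the identity from LP duality: $\sum_{i=1}^k \ell_{[i]}$ is the optimum of $\max\{\sum_i p_i \ell_i : 0 \leq p_i \leq 1,\ \sum_i p_i = k\}$, and dualizing the equality constraint with multiplier $\lambda$ (keeping the box constraints, whose inner maximization of $p_i(\ell_i - \lambda)$ yields exactly $[\ell_i - \lambda]_+$) reproduces $g(\lambda)$; strong duality then gives the equality and complementary slackness identifies $\ell_{[k]}$ as an optimal $\lambda$.
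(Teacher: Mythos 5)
Your proof is correct, but it takes a genuinely different route from the paper. The paper proves the identity via LP duality: it writes $\sum_{i=1}^k\ell_{[i]}(\phi)$ as the optimum of the linear program $\max_{\mathbf{p}}\ \mathbf{p}^\top\mathbb{L}$ subject to $\mathbf{p}^\top\mathbf{1}=k$, $\mathbf{0}\leq\mathbf{p}\leq\mathbf{1}$, forms the Lagrangian with multipliers $(\mathbf{u},\mathbf{v},\lambda)$, eliminates $\mathbf{v}$, and reads off the dual $\min_{\mathbf{u},\lambda}\{\mathbf{u}^\top\mathbf{1}+k\lambda:\ \mathbf{u}\geq\mathbf{0},\ \mathbf{u}+\lambda\mathbf{1}\geq\mathbb{L}\}$, in which minimizing out $\mathbf{u}$ coordinate-wise produces the hinge terms $[\ell_i(\phi)-\lambda]_+$ — this is exactly the alternative you sketch in your closing remarks, so you correctly anticipated it. Your primary argument instead works directly with the one-dimensional convex function $g(\lambda)=k\lambda+\sum_{i=1}^n[\ell_i(\phi)-\lambda]_+$: coercivity (with the $k=n$ edge case noted), the subdifferential $\partial g(\lambda)=[k-n_{\geq}(\lambda),\,k-n_{>}(\lambda)]$, verification that $0\in\partial g(\ell_{[k]})$, and direct evaluation $g(\ell_{[k]})=\sum_{i=1}^k\ell_{[i]}(\phi)$. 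What each buys: the paper's duality derivation explains structurally where the hinge comes from (elimination of the dual variable $\mathbf{u}$) and generalizes naturally to other polyhedral constructions, but it invokes strong duality implicitly and merely asserts that $\ell_{[k]}(\phi)$ is an optimal $\lambda$ without verification (the paper's ``Furthermore'' step). Your approach is self-contained and elementary, supplies the explicit optimality certificate the paper omits, characterizes the \emph{entire} set of minimizers via $n_{>}(\lambda)\leq k\leq n_{\geq}(\lambda)$, and handles ties rigorously through the subdifferential interval — arguably a more complete proof of the stated lemma than the paper's own.
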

\begin{proof}
    We know $\sum_{i=1}^k\ell_{[i]}(\phi)$ is the solution of
    \begin{align}
        \max_\textbf{p} \textbf{p}^\top\mathbb{L}, \ \ \text{s.t.} \textbf{p}^\top \textbf{1} = k, \textbf{0} \leq \textbf{p} \leq \textbf{1}. \nonumber
    \end{align}
    We apply Lagrangian to this equation and get 
    \begin{align}
        L=-\textbf{p}^\top \mathbb{L} - \textbf{v}^\top \textbf{p} + \textbf{u}^\top (\textbf{p}-1) + \lambda(\textbf{p}^\top \textbf{1} - k), \nonumber
    \end{align}
    where $\textbf{u}\geq\textbf{0}$, $\textbf{v}\geq\textbf{0}$, $\lambda \in \mathbb{R}$ are Lagrangian multipliers. Taking its derivative w.r.t $\textbf{p}$ and set it to $0$, we have $v = \textbf{u} - \mathbb{L} + \lambda \textbf{1}$. Substituting it back to Lagrangian, we have
    \begin{align}
        \min_{\textbf{u}, \lambda} \textbf{u}^\top \textbf{1} + k \lambda, \text{s.t.} \textbf{u} \geq \textbf{0}, \textbf{u} + \lambda \textbf{1} - \mathbb{L} \geq \textbf{0}. \nonumber
    \end{align}
    This means
    \begin{align}
        \sum_{i=1}^k\ell_{[i]}(\phi) = \min_\lambda \bigg\{ k\lambda + \sum_{i=1}^n[\ell_i(\phi) - \lambda]_+ \bigg\}. 
    \label{eq:lemma1}
    \end{align}
    Furthermore, we can see that $\lambda=\ell_{[i]}(\phi)$ is always one optimal solution for Eq.\ref{eq:lemma1}. So
    \begin{align}
        \ell_{[k]}(\phi) \in \arg\min_\lambda \bigg\{ k\lambda + \sum_{i=1}^n[\ell_i(\phi) - \lambda]_+ \bigg\}.\nonumber
    \end{align}
\end{proof}

\begin{theorem}(Theorem \ref{theorem1} restated)
    Denote $[a]_+=\max\{0,a\}$ as the hinge function. Equation (\ref{eq:aorr_meta}) is equivalent to
\begin{align}
    \min_{\phi, \lambda} \max_{\hat{\lambda}}& \hat{\mathcal{L}}_{\mathcal{D}}(\phi,\bw^*)= \frac{1}{k-m}\sum_{i=1}^n\hat{\mathcal{L}}(\phi, w_i^*,\lambda,\hat{\lambda})\nonumber\\
    &:=\Big[[\ell_i(\phi)-\lambda]_+-[\ell_i(\phi)-\hat{\lambda}]_++\frac{k}{n}\lambda-\frac{m}{n}\hat{\lambda}\Big]\nonumber\\
    \emph{s.t.} \ \ \bw^* &= \arg\min_{\bw}\mathcal{L}_{\mathcal{S}}(\phi, \bw):=\frac{1}{n}\sum_{i=1}^n\mathcal{L}_{\mathcal{S}_i}(\phi, w_i),
    \end{align}
If the optimal of $\phi$ and $\bw^*$ are achieved, $(\ell_{[k]}(\phi),\ell_{[m]}(\phi))$ can be the optimum solutions of $(\lambda,\hat{\lambda})$.
\end{theorem}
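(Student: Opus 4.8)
The plan is to reduce the ranked-range numerator $\sum_{i=m+1}^k \ell_{[i]}(\phi)$ to a \emph{difference} of two ordinary top-sums and then invoke Lemma \ref{lemma1} on each piece. First I would use the telescoping identity
\[
\sum_{i=m+1}^k \ell_{[i]}(\phi) = \sum_{i=1}^k \ell_{[i]}(\phi) - \sum_{i=1}^m \ell_{[i]}(\phi),
\]
which holds because both sides enumerate the same sorted losses. Applying Lemma \ref{lemma1} with cutoff $k$ to the first term gives $\sum_{i=1}^k \ell_{[i]}(\phi) = \min_{\lambda} \{ k\lambda + \sum_{i=1}^n [\ell_i(\phi) - \lambda]_+ \}$, and applying it again with cutoff $m$ to the second term gives $\sum_{i=1}^m \ell_{[i]}(\phi) = \min_{\hat{\lambda}} \{ m\hat{\lambda} + \sum_{i=1}^n [\ell_i(\phi) - \hat{\lambda}]_+ \}$.

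The crux is to merge these two separate optimizations into the single saddle-point objective in the statement. I would flip the second minimization into a maximization via $-\min_{\hat{\lambda}} g(\hat{\lambda}) = \max_{\hat{\lambda}} (-g(\hat{\lambda}))$, so the subtracted top-$m$ sum becomes $\max_{\hat{\lambda}}\{ -m\hat{\lambda} - \sum_{i=1}^n [\ell_i(\phi) - \hat{\lambda}]_+ \}$. Since the $\lambda$-dependent part and the $\hat{\lambda}$-dependent part are \emph{additively separable}---neither inner variable appears in the other's objective---the sum of a $\min_{\lambda}$ and a $\max_{\hat{\lambda}}$ may be combined into a single $\min_{\lambda}\max_{\hat{\lambda}}$ with no order-of-operations or saddle-point subtlety. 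Justifying this separability cleanly is the one point that must be argued with care, and is the main (if mild) obstacle in the argument.

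It then remains to divide through by $k-m$ and absorb the linear terms $k\lambda$ and $-m\hat{\lambda}$ back into the per-task sum by writing $k\lambda = \sum_{i=1}^n \tfrac{k}{n}\lambda$ and $m\hat{\lambda} = \sum_{i=1}^n \tfrac{m}{n}\hat{\lambda}$, which reproduces exactly the bracketed summand $[\ell_i(\phi)-\lambda]_+ - [\ell_i(\phi)-\hat{\lambda}]_+ + \tfrac{k}{n}\lambda - \tfrac{m}{n}\hat{\lambda}$ of equation (\ref{eq:aorr_meta_minimax}); the inner constraint defining $\bw^*$ is carried through unchanged. Finally, for the optimal-values claim I would invoke the ``furthermore'' part of Lemma \ref{lemma1} twice: $\ell_{[k]}(\phi)$ is a minimizer of the top-$k$ variational problem, hence an optimal $\lambda$, and $\ell_{[m]}(\phi)$ is a minimizer of the top-$m$ problem, hence---since $\arg\max(-g) = \arg\min g$---an optimal $\hat{\lambda}$. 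This gives $(\lambda,\hat{\lambda})=(\ell_{[k]}(\phi),\ell_{[m]}(\phi))$ once $\phi$ and $\bw^*$ attain their optima.
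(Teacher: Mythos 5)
Your proposal is correct and takes essentially the same route as the paper's own proof: the telescoping decomposition $\sum_{i=m+1}^k \ell_{[i]}(\phi) = \sum_{i=1}^k \ell_{[i]}(\phi) - \sum_{i=1}^m \ell_{[i]}(\phi)$, two applications of Lemma \ref{lemma1}, flipping the subtracted minimization into an inner maximization to obtain the $\min_{\phi,\lambda}\max_{\hat{\lambda}}$ form, and the ``furthermore'' clause of the lemma to conclude $(\lambda,\hat{\lambda}) = (\ell_{[k]}(\phi^*), \ell_{[m]}(\phi^*))$ at the optimum. Your explicit justification of the merge via additive separability of the $\lambda$- and $\hat{\lambda}$-dependent parts is in fact slightly more careful than the paper, which performs that step without comment.
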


\begin{figure*}[ht]
 \centering
 \includegraphics[width=\linewidth]{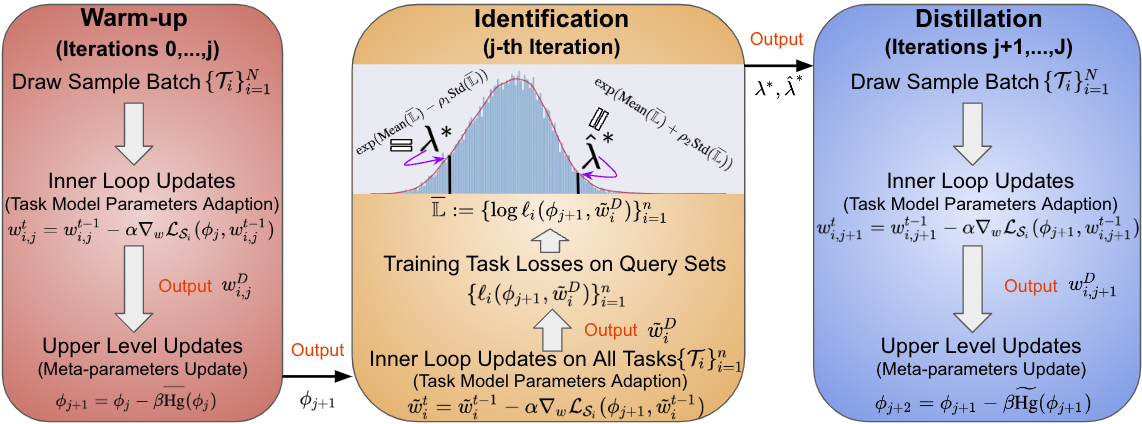}
 \caption{Illustration of statistic-guided learning for solving HeTRoM.}
 \label{fig:Statistic-guided-Learning}
\end{figure*}

\begin{proof}
    From Lemma \ref{lemma1} and upper-level formulation in (\ref{eq:aorr_meta}), we have
    \begin{equation}
        \begin{aligned}
            \min_{\phi} \mathcal{L}_{\mathcal{D}}&(\phi, \bw^*) =\frac{1}{k-m}\min_{\phi}\sum_{i=m+1}^k \ell_{[i]}(\phi) \\ &=\frac{1}{k-m} \min_{\phi} \bigg[\sum_{i=1}^k \ell_{[i]}(\phi) - \sum_{i=1}^m \ell_{[i]}(\phi)\bigg] \\
            & = \frac{1}{k-m} \min_{\phi} \bigg[\min_{\lambda \in \mathbb{R}}\bigg\{k\lambda + \sum_{i=1}^n[\ell_i(\phi) - \lambda]_+\bigg\} \\ & \ \ \ \ - \min_{\hat\lambda \in \mathbb{R}}\bigg\{m\hat\lambda + \sum_{i=1}^n[\ell_i(\phi) - \hat\lambda]_+\bigg\}\bigg] \\
            &= \frac{1}{k-m} \min_{\phi, \lambda} \max_{\hat\lambda}\sum_{i=1}^n\bigg[ [\ell_i(\phi) - \lambda]_+ - [\ell_i(\phi) - \hat\lambda]_+ \\
            & \ \ \ \ \ \ \ + \frac{k}{n}\lambda - \frac{m}{n}\hat\lambda \bigg]. \nonumber
        \end{aligned}
    \end{equation}
    According to Lemma \ref{lemma1}, if the optimal parameter $\phi^*$ is achieved, we have $\lambda = \ell_{[k]}(\phi^*)$ and $\hat\lambda = \ell_{[m]}(\phi^*)$. Since $k > m$, it is obvious that $\hat{\lambda} > \lambda$.
\end{proof} 

\section{The Algorithm of sg-HeTRoM} \label{sec:alg2}

In this section we provide comprehensive illustration of sg-HeTRoM method mentioned in \ref{sec:statistic-guided-learning}, as in Figure \ref{fig:Statistic-guided-Learning} and Algorithm \ref{alg:2}.

\begin{algorithm*}[t]
	\renewcommand{\algorithmicrequire}{\textbf{Require:}}
	\caption{Optimizing procedure of sg-HeTRoM}
	\label{alg:2}
	\begin{algorithmic}[1]
 \STATE {\bfseries Input:} Warmup iteration number $P$; total iteration number $J$; inner-loop iteration number $D$; batch size $N$; stepsize $\alpha, \beta, \gamma $, initial meta-parameter $\phi_0$; initial task-specific parameter $w_0$;  warmup flag $flag=1$; hyperparameters $\rho_1$, $\rho_2$.
            \FOR{$j=0,1,2,...,J$}
            \STATE{Sample a batch of tasks $\{\mathcal{T}_i\}_{i=1}^N$}
            \FOR{each task $\mathcal{T}_i$ in $\{\mathcal{T}_i\}_{i=1}^N$}
                \STATE{Initialize task-specific parameter as $w_{i,j}^{0} = w_0$}
                \FOR{$t=1,2,...,D$}
                \STATE{Update $w_{i, j}^t = w_{i, j}^{t-1}-\alpha \nabla_{w} \mathcal{L}_{\mathcal{S}_i}(\phi_j,w_{i, j}^{t-1}) $}
                \ENDFOR
            \ENDFOR

            \IF{$flag == 1$}
            \STATE{
                Calculate   hypergradient of $L_{\mathcal{D}}(\phi, \bw^*)$   w.r.t. $\phi$: $\overline{\text{Hg}}(\phi_j)=\frac{1}{N}\frac{\partial \sum_{i=1}^N \ell_i(\phi_j, w^D_{i,j})}{\partial  \phi_j}$
            }
            \STATE{
                Update meta-parameter $\phi_{j+1}=\phi_j - \beta\overline{\text{Hg}}(\phi_j)$ 
            }
            \IF{$j==P$}
                \FOR{each task $\mathcal{T}_i$ in the training set of all tasks $\{{\mathcal{T}}_i\}_{i=1}^n$}
                    \STATE{Initialize task-specific parameter as $\Tilde{w}_i^0 = w_0$}
                    \FOR{$t=1,2,...,D$}
                    \STATE{Update $\Tilde{w}_i^t = \Tilde{w}_i^{t-1}-\alpha \nabla_{w} \mathcal{L}_{\mathcal{S}_i}(\phi_{j+1},\Tilde{w}_i^{t-1}) $}
                    \ENDFOR
                \ENDFOR
                \STATE{Calculate log of all training task losses on their query set: $\overline{\mathbb{L}}=\{\log \ell_i(\phi_{j+1},\Tilde{w}_i^{D})\}_{i=1}^n$}
                \STATE{ Calculate+.
                $\begin{pmatrix}
                \lambda^*\\ 
                \hat{\lambda}^*
                \end{pmatrix} \leftarrow 
                \begin{pmatrix}
                    \exp(\text{Mean}(\overline{\mathbb{L}}) - \rho_1\text{Std}(\overline{\mathbb{L}}))\\
                    \exp(\text{Mean}(\overline{\mathbb{L}}) + \rho_2\text{Std}(\overline{\mathbb{L}}))
                \end{pmatrix}
                $
                }
                \STATE{$flag \leftarrow 0 $}
            \ENDIF

            \ELSE
		        \STATE{Update meta-parameter  $\phi_{j+1}=\phi_j - \beta\widetilde{\text{Hg}}(\phi_j)$ based on Eq.~\eqref{static-gradient}}
            \ENDIF
            \ENDFOR
        \end{algorithmic}  
\end{algorithm*}

\section{Implementation Details}\label{sec:add-exp-details}

\noindent
\textbf{Network Architecture.} In this study, we employ two distinct network architectures, namely 4-layer CNN (CNN4) and ResNet12. CNN4 comprises four convolutional layers with a feature dimension of 64, each followed by a ReLU activation layer and max pooling. On the other hand, ResNet12 consists of four residual blocks, each containing three convolutional layers, along with batch normalization and ReLU activation.

\noindent
\textbf{Training Parameters.} In the experimental section, we employ different hyperparameters for different network architectures.  For both architectures, we fix the total number of iterations to $J=2000$, and utilize Adam optimizer with a batch size of 16 throughout the training process. For CNN4, we set the inner-loop iteration number $D=10$, and learning rates are configured as $\alpha=0.05$, $\beta=\gamma=0.002$. On the other hand, the inner-loop iteration number of ResNet12 $D=3$, and the learning rates are $\alpha=0.5$, $\beta=\gamma=0.003$.

\noindent
\textbf{Implementation Details.} We conduct our experiments in a few-shot classification setting. Two types of neural network backbones are used for meta-training: a 4-layer CNN (CNN4) and ResNet12. Our proposed method is trained using a mini-batch ADAM algorithm with a batch size of $16$. The total iterations $J$ are set to $2000$, with inner-loop iterations $D$ at $10$. The initial learning rates are $\alpha=0.05$, $\beta=\gamma=0.002$. All our experiments are conducted using the PyTorch platform with two NVIDIA RTX A6000 GPUs. We construct our loss function based on the cross-entropy loss for the tasks. In line with \cite{hu2020learning}, we conduct a grid search to select the values of $k$ and $m$. We select $\rho_1$ and $\rho_2$ from the set $\{1,2,3\}$. The results we present, including a $95$\% confidence interval, are based on five random runs. Unless stated otherwise, all experiments are conducted based on Algorithm \ref{alg:1}. The detail of generating heterogeneous tasks can be found in Appendix \ref{sec:add-exp-details}.

\noindent
\textbf{Learning Rate Strategy for Training ResNet12 Model.} Throughout the meta-training process of the ResNet12 model, the initial individual task losses are high but undergo a significant decrease as training progresses. In order to ensure that our thresholds, $\lambda$ and $\hat{\lambda}$, are able to adapt to the changing scale of task losses, we introduce a scaling factor for the learning rate $\gamma$, denoted as $\delta$. This scaling factor starts with a large value at the beginning of meta-training and linearly decreases over time, ultimately reaching 0 at the end of training. This procedure is denoted as $\gamma_i = i\delta \gamma_0 /J$, where $\gamma_i$ denotes the learning rate of $\lambda$ and $\hat\lambda$ at $i$-th iteration, and $\gamma_0$ is the base learning rate. In our experiments, we set $\gamma_0=\beta=0.003$ and the scaling factor $\delta=1000$.

\noindent
\textbf{The Generation of Clean Tasks.} We designate the original Mini-ImageNet and Tiered-Imagenet datasets as clean datasets, from which we extract tasks for conducting experiments in the clean task setting. Our experiments focus on two few-shot classification tasks: 5-ways 5-shots, and 5-ways 1-shot. In both tasks, the ``way" number refers to the number of classes in the task and the ``shot" number indicates the number of examples sampled from each class.

\noindent
\textbf{The Generation of Noisy Tasks.} we consider three generation settings for noisy tasks: fixed, 1-step random, and 2-step random. (a) In the \textit{fixed} setting, we assign a specific noise probability for each of the tasks in a batch and then change the label of each sample to another randomly chosen label in each task according to the preset probability. For example, consider a batch of 16 tasks, where the first five tasks have no noise, the next five tasks have 10\% noise each, the following three tasks have 30\% noise each, the 14th and 15th tasks have 50\% noise each, and the 16th task has 70\% noise in each. {This setting can help us track the noise in each batch to show the process of eliminating noisy tasks from training.} (b) In \textit{1-step random} setting, for each task, we randomly choose its noise probability from $\{0\%, 10\%, 20\%, 30\%, 40\%\}$. (c) In \textit{2-step random}, following \cite{yao2021meta}, we first randomly select 60\% of tasks from all training tasks and set their noise probability to 80\%.

\noindent
\textbf{The Generation of Outlier Tasks.} (a) For the Tiered-ImageNet dataset, we consider the preset categories in Tiered-ImageNet as different domains and randomly select two categories: one is designated as the target category, and the other as the outlier category. We split all classes in each category into separate training and test parts with a 7:3 ratio. During the training phase, we randomly select a portion (\ie, $10$\%, $30$\%) of tasks as outlier tasks, whose samples are extracted from the training part of the outlier category. Samples from other tasks are taken from the training part of the target category. During the test phase, all tasks are sampled from the test part of the target category. (b) For Mini-ImageNet dataset, we cross-reference the classes in Mini-ImageNet with the classes in Tiered-ImageNet to determine the category to which each class in Mini-ImageNet belongs. This allows us to identify the categories included in Mini-ImageNet, and the remaining categories in Tiered-ImageNet are considered outlier categories. In training phase, we sample a portion (\ie, 10\%, 30\%) of tasks from outlier categories as outlier tasks, while the remaining tasks are sampled from Mini-ImageNet. During test phase, we exclusively evaluate the model's performance on Mini-ImageNet.

\section{Additional Experimental Results}\label{sec:add-exp-results}

\begin{table}[t]
\renewcommand\arraystretch{1.2}
\centering
    \scalebox{1}{
        \begin{tabular}{c|ccc|c}
            \hline
            Methods                & MAML   & ATS   & Eigen-Reptile & Ours \\ \hline
            Time & 1.22s   & 2.68s & 3.59s   & 1.36s     \\ \hline
        \end{tabular}
    }
\caption{\textit{Average iteration time of our method and baseline methods.}}\label{table:time}
\end{table}


\noindent
\textbf{Time complexity analysis.}  The time complexity of our method is comparable to MAML~\cite{finn2017model}, requiring only the maintenance of two extra scalar variables thus ensuring efficiency relative to other SOTA methods, which typically require additional computations (\eg, weight calculation in \cite{yao2021meta}). In Table \ref{table:time} we display the average iteration time of our method and baseline methods, showing that our method can achieve better efficiency.


\begin{table*}[t]
\renewcommand\arraystretch{1.2}
\center
\scalebox{0.8}{
\begin{tabular}{c|c|cc|ccc|ccc}
\hline
\multirow{2}{*}{Dataset}            & \multirow{2}{*}{Method} & \multicolumn{2}{c|}{Clean}      & \multicolumn{3}{c|}{Noisy (5-ways 5-shots)}                        & \multicolumn{3}{c}{Outlier (3-ways 5-shots)}                      \\
                                 &                         & 5w5s           & 5w1s          & Fixed          & 1-step         & 2-step         & ratio=0        & ratio=0.1      & ratio=0.3      \\ \hline
\multirow{3}{*}{Mini-ImageNet}   & MAML                                        & 48.72          & 42.25         & 41.00             & 41.85          & 38.95          & 65.83          & 65.33          & 64.58          \\ \cline{2-10}
                                 & HeTRoM(Ours)                                & \colorbox{llgray}{\textbf{59.90}}  & \colorbox{llgray}{46.25}         & \colorbox{llgray}{\textbf{59.45}} & \colorbox{llgray}{\textbf{59.15}} &\colorbox{llgray} {\textbf{58.25}} & \colorbox{llgray}{75.42}          & \colorbox{llgray}{72.50}           & \colorbox{llgray}{71.25}          \\
                                 & sg-HeTRoM(Ours)                             & \colorbox{llgray}{58.94}          & \colorbox{llgray}{\textbf{46.50}} & \colorbox{llgray}{57.60}           & \colorbox{llgray}{57.25}          & \colorbox{llgray}{56.75}          & \colorbox{llgray}{\textbf{79.41}} & \colorbox{llgray}{\textbf{78.75}} & \colorbox{llgray}{\textbf{77.08}} \\ \hline
\multirow{3}{*}{Tiered-ImageNet} & MAML                                        & 56.10           & 46.25         & 52.85          & 52.60           & 47.85          & 59.41          & 58.50           & 57.40           \\ \cline{2-10}
                                 & HeTRoM(Ours)                                & \colorbox{llgray}{\textbf{62.75}} & \colorbox{llgray}{\textbf{49.50}} & \colorbox{llgray}{\textbf{62.10}}  & \colorbox{llgray}{\textbf{61.65}} & \colorbox{llgray}{\textbf{61.65}} & \colorbox{llgray}{59.75}          & \colorbox{llgray}{59.50}           & \colorbox{llgray}{59.42}          \\
                                 & sg-HeTRoM(Ours)                             & \colorbox{llgray}{61.75}          & \colorbox{llgray}{47.25}        & \colorbox{llgray}{60.89}          & \colorbox{llgray}{61.25}          & \colorbox{llgray}{60.10}           & \colorbox{llgray}{\textbf{64.79}} & \colorbox{llgray}{\textbf{62.58}} & \colorbox{llgray}{\textbf{62.17}}  \\ \hline
\end{tabular}
}
\caption{\label{tab:res-100-tasks} \textit{Test accuracy (\%) on Mini-ImageNet and Tiered-ImageNet using CNN4 architecture with 100 training tasks.
The best results are shown in \textbf{Bold} for each setting. \colorbox{llgray}{Gray} indicates our methods outperform baseline methods in the same setting.}} 
\end{table*}

\begin{table*}[ht]
\centering
\scalebox{0.8}{
\begin{tabular}{c|c|cc|cc}
\hline
\multirow{2}{*}{Model}           & \multirow{2}{*}{Method} & \multicolumn{2}{c|}{Clean}         & \multicolumn{2}{c}{60\% noisy samples} \\ \cline{3-6} 
                                 &                         & \multicolumn{1}{c|}{5w5s}  & 5w1s  & \multicolumn{1}{c|}{5w5s}    & 5w1s    \\ \hline
\multirow{2}{*}{ResNet18}        & TIM w/o HeTRoM          & \multicolumn{1}{c|}{ $79.11 \pm 0.14$} & $59.53 \pm 0.21$ & \multicolumn{1}{c|}{$73.56 \pm 0.21$}   & $\textbf{57.21} \pm \textbf{0.21}$   \\
                                 & TIM w HeTRoM            & \multicolumn{1}{c|}{$\textbf{80.16} \pm \textbf{0.14}$} & $\textbf{61.03} \pm \textbf{0.20}$ & \multicolumn{1}{c|}{$\textbf{74.48} \pm \textbf{0.16}$}   & $57.20 \pm 0.20$   \\ \hline
\multirow{2}{*}{WideResNet28-10} & TIM w/o HeTRoM          & \multicolumn{1}{c|}{$\textbf{82.81} \pm \textbf{0.13}$} & $62.59 \pm 0.20$ & \multicolumn{1}{c|}{$74.02 \pm 0.16$}   & $57.37 \pm 0.21$   \\
                                 & TIM w HeTRoM            & \multicolumn{1}{c|}{$82.68 \pm 0.14$} & $\textbf{63.35} \pm \textbf{0.20}$ & \multicolumn{1}{c|}{$\textbf{74.28} \pm \textbf{0.16}$}   & $\textbf{57.49} \pm \textbf{0.21}$   \\ \hline
\end{tabular}
}
\caption{ Test accuracy on MiniImageNet compared with TIM.}
\label{exp:tim}
\end{table*}

\noindent
\textbf{Meta-learning with Extreme Few Tasks.} In real-world scenarios, users may face challenges accessing a substantial number of tasks for training the meta-model.  We conducted experiments under the constraint of a limited task set (specifically, 100 tasks in this setting) during training, and the outcomes are presented in Table \ref{tab:res-100-tasks}. The results reveal a noteworthy performance decline for MAML compared to the experiments detailed in Table \ref{table1}, rendering it impractical in such constrained settings. In contrast, our method demonstrates appropriate performance, with particularly noteworthy efficacy in noisy settings.

\noindent
\textbf{Incorporating with SOTA few-shot learning method.} 
In addition to SOTA meta-learning techniques, our sg-HeTRoM method is readily adaptable to other few-shot learning approaches with various training paradigms. We apply our sg-HeTRoM method at the sample level to integrate with TIM~\cite{boudiaf2020transductive}, with the results presented in Table \ref{exp:tim}. To be specific, we apply the same model settings as in \cite{boudiaf2020transductive}, and use the clean and 2-step random noisy setting as in Section \ref{experimental-settings}. In implementing our sg-HeTRoM approach at the sample level, we continue to adhere to the Warm-up, Identification, and Distillation framework specified in Section \ref{sec:statistic-guided-learning}, with the adaptation that the Identification phase is executed at the sample level. The results demonstrate that incorporating our method allows TIM to attain comparable or even superior performance across different model architectures in both clean and noisy environments, underscoring the effectiveness of our approach when combined with SOTA methods.

\end{document}